\newtheorem{axiom}{Axiom}[section]
\newtheorem{definition}{Definition}[section]
\newtheorem{theorem}{Theorem}[section]
\numberwithin{equation}{section}
\title[Why Nature runs is parallel?]{Parallel Computation is ESS}
\author{Nabarun Mondal}
\address{D.E.Shaw \& Co. India, Hyderabad }
\email{mondal@deshaw.com}
\thanks{Dedicated to my late professor Dr. Prashanta Kumar Nandi. \\ 
Dedicated to my parents without their presence we are nothing. \\
Big thanks to: Abhishek Chanda,Gurpreet Singh (Manager),Waseem Fazal,Sahiba
Khurana, Jaipal Reddy : You all have been constant support. \\
In Memory of : Dhrubajyoti Ghosh, my closest human being. RIP. 
}
\author{Partha P. Ghosh}
\address{Microsoft India, Hyderabad }
\email{parthag@microsoft.com}
\subjclass[2010]{Primary 68Q32 ; 68Q05 ;  Secondary 68T05 ; 68Q10 ; 62P10 ; 
97M60 ; 92D15 ; 91A80 }  
\begin{document}

\keywords{
learning ; Turing Machines ; Models of Learning ; Decidability ;
sequential ; parallel computation ; evolution ;
Nash Equilibrium ; Evolutionary Stable Strategy  
}

\begin{abstract}
There are enormous amount of examples of Computation in nature,
exemplified across multiple species in biology. 
One crucial aim for these computations across all life forms
their ability to learn and thereby increase the chance of their survival. 
In the current paper a formal definition of autonomous learning is proposed.
From that definition we establish a Turing Machine model for learning,
where rule tables can be added or deleted, but can not be modified. 
Sequential and parallel implementations of this model are discussed.
It is found that for general purpose learning based on this model, 
the implementations capable of parallel execution would be evolutionarily stable.
This is proposed to be of the reasons why in Nature parallelism in computation is found in abundance. 
\end{abstract}

\maketitle

\begin{section}{Introduction}\label{intro}
Computations abound in nature, and it is not hard to fathom that 
one of the purposes of the natural computation is to learn.
A more learned individual would thereby gather advantage over the others \cite{rd1},
and survive. How learning evolved in nature remains a crucial question to be answered.

The extended Church Turing Thesis \cite{rp1}\cite{rp2}   
\cite{cd}\cite{amu}\cite{ms} normally stated in the form:
\emph{any computation happening in nature must also have a Turing Machine analog;}
suggests that no computation in nature can exceed the power of a Turing Machine \cite{at1}.
This intriguing thought of simulating nature on Turing Machines 
provoked the study of machine learning and artificial intelligence \cite{at2} \cite{js}.

One school of thought about machine learning 
follows the sequential way of creating machines which are capable of learning.
In fact scholars from the sequential school argue \cite{rp1}\cite{rp2} that  
parallel systems won't have any added advantage over the sequential ones, 
because both are reducible to Turing Machines.

Another school of thought \cite{ab} questions the sequential learning strategies, 
as nature seems to be inherently running in parallel. 
The massive parallel structures of neurones in brain \cite{vsr} prompted
the artificial neural network studies \cite{rf} \cite{rj}, and it is  
now known that bacterial colony learn by pooling in 
their individual (bacterium) resources \cite{ebj} which makes the learning strategy 
parallel by definition. Scholars from the \emph{parallel} school ask the question :-

\emph{Given Nature is inherently parallel,
how useful the sequential way of learning going to be?}

It is well known that computationally both of these models actually yield the same power 
\cite{ms} \cite{jw} : \emph{every parallel system must have a sequential analog.} 
The sequential models would take more time, less computing power, and less space, 
while the parallel ones would take less time, more computing power and more space.
Clearly then, the answer to \emph{``why there are parallelism in nature?''} lies not in the computing power
of the abstract Turing Machine,  but had to do with how each model 
has evolved in nature with no supervision ( a.k.a blind designoid ) \cite{rd2}.

Can one then decisively argue the need for which 
parallel computation is prevailing in Nature? 
This question begs an answer because parallelism in execution 
is harder to attain in the artificial settings of man made computing machines.
Artificial parallel systems are faster, but too simple
when compared against the natural systems like human brain, 
which are slower but far more complex\cite{rp1}.

In the present paper we take up this exact problem.
We notice that the real problem about systems evolving in nature is:-

\emph{Given redesign is not possible in nature,
what type of computation circuitry would evolutionarily arise and would become dominant?}

We discuss the abstract learning procedure in the Section \ref{learning}.
We establish that there is an abstract autonomous (blind,directionless) 
system capable of learning.

Section \ref{models} discusses sequential and parallel design 
(Dawkins \emph{designoid}, as any evolved, blind organization 
is not a design in a true sense \cite{rd1}). 
In the Section \ref{comparison} 
we compare these two models from optimality or resources standpoint.
We find that for general purpose learning, parallelism has evolutionary advantage.
It is not only because it is faster, but also because 
no other \emph{computationally}  better organization 
is possible, and \emph{computationally} it can not be improved.
Finally in the Section \ref{evo-l} we put all
these results together and show that once parallel strategy 
(in the sense of game theory) has invaded the population,
it would become dominant, and stay dominant, 
because it is an \emph{evolutionarily stable strategy}.
Therefore, we conclude that, the theory of computation 
and game theory together can explain the prevalence of 
parallelism in computational circuitry in nature. 
\end{section}

\begin{section}{Learning}\label{learning}
Everyone has an intuitive idea about what learning is.
But that idea relies upon another intuitive idea of what is knowledge.
For example \emph{Learning is, knowing what one did not know earlier}.

In this sense, there are some properties of learning one can summaries:-

\begin{definition}\label{gen-learning}
\textbf{Informal Description of Learning.}

After a system has learned, at some time $t_L$ the following properties hold:-
\begin{enumerate}
\item{ The system could achieve something which was unachievable at $t$ , $t < t_L$.}
\item{ The system would continue achieving everything it could achieve before $t_L$.}
\item{ The system should be autonomous, devoid of supervision by any intelligent agent. }
\end{enumerate}
\end{definition}

The third point needs elaboration. Once the system is set into motion,
after that no tweaking should be done with the system.
In specificity \emph{getting out of the system to perform system optimization}
is not allowed at all for autonomous learning.
This jumping out of the system idea is elaborated heavily in \cite{dh}.
 
We are not into the philosophies of \emph{understanding}. 
Searle's Chinese room argument \cite{js} and related arguments both in favour or against 
elaborates the lack of formalism in \emph{understanding}.

In a formal sense then, without getting into the \emph{understanding} ,
if there is a set of \emph{knowledge}
associated with the system  which can
somehow be identified as the set $\mathcal{K}(t_1)$ at time $t_1$,
and at a later time $t_2$ with the set $\mathcal{K}(t_2)$, then the statement
\emph{a learning took place between $t_1$ to $t_2$} 
is equivalent to the formal statement:-
$$
\mathcal{K}(t_1)  \subset \mathcal{K}(t_2) . 
$$ 
The learned knowledge can be modelled by :-
$$
\mathcal{L}(t_1,t_2) = \mathcal{K}(t_2) \setminus \mathcal{K}(t_1)
$$

But these semi-formal definitions would not really formalize learning 
given that the set $\mathcal{K}(t)$ was not formally defined.
Can we formally quantify the set $\mathcal{K}(t)$ ? 

At this point, we argue that the existence of $\mathcal{K}(t)$
can not be measured without the effect of $\mathcal{K}(t)$
exhibited by the system, which should only be identified by experimentation.
If the \emph{Extended Church Turing thesis} \cite{rp1}\cite{rp2}\cite{ms} 
is true, that is any model of computation
has an analogous Turing Machine model, then,
the effect of $\mathcal{K}(t)$ can be found 
in pretty straight forward manner. Considering a\emph{system}
that can \emph{recognize certain input sets}, we can define  \emph{knowledge}
as the \emph{input set the system recognises}.

\begin{definition}\label{system}
\textbf{System.}

Let $\Sigma$ be an alphabet, and $\omega, \alpha,\rho \not \in \Sigma$.
Let a system $S$ be defined by a ``black box'' with input $I$ a string from $\Sigma$ :-
$$
I \in \Sigma^*
$$
and with output symbol $\mathcal{O}$:-
$$
\mathcal{O} \in \{ \omega , \alpha , \rho \}
$$
Taking input sets the systems output to $\mathcal{O} = \omega$, (`working' symbol). 
Then later the system can either accepts the input string 
by outputting $\alpha$, or reject it by outputting $\rho$, or can do neither.
The set of all strings the system ``accepts'' be called as $C(S)$, the language class
accepted by `$S$'.
Hence, 
$$
S : \Sigma^* \to \{ \omega , \alpha , \rho \} 
$$
\end{definition}

To give an example take a bacterial colony `$S$' which can produce glucose from a set of chemical soups $\Psi$.
The set of all chemical soups, from which the colony can produce glucose can be called the language $\Psi = C(S)$ 
while, the presence of glucose after some time can be taken as outputting the symbol $\alpha$.

\begin{definition}\label{formal-learning}
\textbf{Formal Definition of Learning.}

Let a system (definition \ref{system}) at time $t$ 
accepts the language class $C(S(t))$. 
The system has learned within time $t_1 \to t_2 , t_1 \le t_2 $ 
iff:- 
$$
C(S(t_1)) \subset  C(S(t_2)) 
$$
The knowledge of the system $\mathcal{K}(S) = C(S)$, and learning is precisely 
defined as:-
$$
\mathcal{L}(t_1,t_2) = C(S(t_2)) \setminus  C(S(t_1)) 
$$
\end{definition}   

Comparing the definitions \eqref{gen-learning} with 
\eqref{formal-learning} we can see that \eqref{formal-learning}
is just a type of \eqref{gen-learning} which matches with the intuition.
Knowledge accumulation is now experimentally verifiable. 
To show that a system has learned within the time interval $(t_1,t_2)$, 
one has to find a string  $w_2  \in C(S(t_2))$ such that $w_2 \not \in C(S(t_1))$. 
We also note that the strict subset inclusion $C(S(t_i)) \subset C(S(t_j))$ for $i < j$
makes learning a \emph{filtration} \cite{jd} \cite{klc} process. 
 
Comparing with the earlier example, given that the colony could produce glucose from one for 
at least one additional chemical soup, let's call it $\mathcal{K}$ that it was not able to use earlier, 
would indeed qualify as \emph{learning}.
 
But what is the \emph{natural} mechanism of this \emph{recognition or acceptance?} 
If strong Church Turing Thesis is correct, then the recognition 
would have an equivalent Turing Machine model. From that perspective, 
a system can be defined as follows:-
 
\begin{definition}\label{system-turing}
\textbf{System Modelled by Universal Turing Machine.}

A system $S_T$  modelled by an Universal Turing machine is a mechanism comprising of 
a set of rule-tables : $R(t)=\{ r_1, r_2, ... \}$, which can be used as programs for the universal Turing machine.
The set $R(t)$ changes over time, denoting the time evolution of the system as $S_T(t)$.
The universal Turing machine can nondeterministically choose a rule-table(or rule-book) to recognise a string. 
The language class $C(S_T(t))$ is entirely determined by $R(t)$.
Let the language class accepted by simulating rule table $r_k$ be $C(r_k)$.
Then, 
$$
C(S_T(t)) = \bigcup C(r_k)
$$  
\end{definition}   

Now, this modelling has interesting consequences. 
The time evolution of $R(t)$ has to be such that the language class $C(R(t))$
stays a filtration to suggest that the system `$S_T$' is \emph{learning}.
What would be the possible mechanism of the time evolution of $R(t)$?

The only way to change the language class $C(S)$ is to change the set of the rule-books.
But what kind of change would be allowed? The rule books $r_k$ are programs, 
therefore, random change won't always work, and then, there is no warranty that the result would remain
a filtration after a change, that is the relation $C(r_k(t_1)) \subset C(r_k(t_2))$ will hold.

However, there is another way, which will always have the filtration condition satisfied. 
That is introducing new rule-tables in the set $R$, instead of modifying any of them.
This solves the problem of \emph{How a modification of rule-book always work}?

How a new rule table initially would get \emph{created} is a different problem, and is not being considered here.
But given that a new rule table somehow gets inserted in `$R$' at $t_2$, then, we must have:-
$$
C(R(t_1)) \subseteq C(R(t_2)) \; ; \; R(t_2) = R(t_1) \cup \{ r_{new} \} 
$$  
with equality holding only when  $C(r_{new}) \setminus C(S(t_1)) = \varnothing$.

This becomes a crucial hypothesis: \emph { new rule-book (programs) gets created and added to the existing set,
instead of modification of the existing rule-books which we hypothesize to be rare}.

These ideas are formalized in the next set of axioms.
\begin{axiom}\label{n-l}
\textbf{Natural Learning Systems Axioms.}

\begin{enumerate}
\item{ Learning is as defined as definition \ref{formal-learning} and is a filtration process.}
\item{ The only permissible operation to modify $R$ is the addition of a new rule-book $r_{new}$ to 
the existing rules set $R$ , where $\mathcal{R}$ is the set of all rule-books :- 
$$
\mathcal{A} : \mathcal{R} \to \mathcal{R}  
$$
such that :-
$$
\mathcal{A}(R(t_n)) = R( t_{n+1} ) = R(t_n) \cup \{ r_{new} \}
$$
}

\end{enumerate}
\end{axiom}

We now show that a model exists which is capable of autonomous learning as in 
definition \ref{formal-learning}. It adhere to the axioms of \eqref{n-l} 
and learns without any help from any supervisor.
Simply speaking we show that a designoid \cite{rd1} can learn.
Again, here we wont be discussing understanding. Learning  
here is a synonym for \emph{recognizing strings(information content) which was not recognized earlier}.

\begin{theorem}\label{model-learning}
\textbf{There is a learning process following Axiom \eqref{n-l}.}

Let $S(t_n)$ be a system at time $t_n$.
Let a new rule-book be added using operation $\mathcal{A}$, at time $t_n$ such that:-
\begin{equation}\label{cond-learning}
C(r_{new}) \setminus C(S(t_n)) \ne \varnothing 
\end{equation}
then, the operation $\mathcal{A}$ induced learning on the underlying system `$S$'.
\end{theorem}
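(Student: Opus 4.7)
The plan is to verify that the hypothesis of the theorem produces a strict inclusion of language classes in the sense of Definition \ref{formal-learning}, and thereby satisfies the conditions spelled out in Axiom \ref{n-l}. Since the system is modelled as in Definition \ref{system-turing}, the language class accepted at any time is completely determined by the current rule-book set via $C(S_T(t)) = \bigcup_{r_k \in R(t)} C(r_k)$. So the argument reduces to tracking what happens to this union when $\mathcal{A}$ acts on $R(t_n)$.

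First I would unfold the definition of $\mathcal{A}$ from Axiom \ref{n-l}: applying it once yields $R(t_{n+1}) = R(t_n) \cup \{r_{new}\}$. Substituting into the expression from Definition \ref{system-turing} gives
\[
C(S(t_{n+1})) \;=\; \bigcup_{r_k \in R(t_n)} C(r_k) \;\cup\; C(r_{new}) \;=\; C(S(t_n)) \cup C(r_{new}).
\]
This already establishes the weak inclusion $C(S(t_n)) \subseteq C(S(t_{n+1}))$, which is the second clause of the informal Definition \ref{gen-learning} (no prior competence is lost).

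Next I would promote this to strict inclusion using the hypothesis \eqref{cond-learning}. Since $C(r_{new}) \setminus C(S(t_n)) \ne \varnothing$, pick any witness string $w \in C(r_{new})$ with $w \notin C(S(t_n))$. By the union identity above, $w \in C(S(t_{n+1}))$, but by choice $w \notin C(S(t_n))$. Hence $C(S(t_n)) \subsetneq C(S(t_{n+1}))$, which is exactly the definition of learning from \eqref{formal-learning}; moreover the learned knowledge $\mathcal{L}(t_n,t_{n+1})$ contains $w$, confirming a nontrivial filtration step.

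Finally I would note that nothing in the argument invoked any external supervisor: the operation $\mathcal{A}$ only appends an already-formed rule-book to $R$, never mutates an existing one, so clause (3) of Definition \ref{gen-learning} is respected and both requirements of Axiom \ref{n-l} are met. I do not expect a genuine obstacle here — the content of the theorem is essentially definitional, and the only subtle point is making sure the strictness in Definition \ref{formal-learning} is preserved, which is precisely what the nonemptiness hypothesis \eqref{cond-learning} is tailored to guarantee.
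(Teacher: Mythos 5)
Your proposal is correct and follows essentially the same route as the paper: the paper's own proof simply declares the result ``obvious from the discussion of the previous paragraphs,'' and that discussion is precisely the union identity $C(S(t_{n+1})) = C(S(t_n)) \cup C(r_{new})$ together with the observation that equality holds only when $C(r_{new}) \setminus C(S(t_n)) = \varnothing$. Your write-up just makes explicit the witness-string argument for strictness that the paper leaves implicit.
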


\begin{proof}[Proof of the theorem \ref{model-learning} ]
Obvious from the discussion of the previous paragraphs.
Given this condition is met, the operation $\mathcal{A}$ will be inducing learning
to a system by definition. 
\end{proof}

So how  $r_{new}$ gets created? Anything random might work,
but can take many number of steps, with lucky (in a statistical sense) breaks.
In this paper we assume that some process induces creation of the newer rule book $r_{new}$,
but the mechanism is not what we are interested in.
We should note that not always adding a rule-set would induce learning, 
that is why the condition \eqref{cond-learning} is crucial. 

The point to be noted here is that the system which would start \emph{learning},
and the system after some time evolution, can be \emph{fundamentally} different, due to the 
nature of the rule-book being used. To call $S(t_{old})$ and $S(t_{new})$ different system
is analogous to the \emph{Ship of Theseus} or \emph{Theseus Paradox}.
It raises the question of whether an object which had some or all its components replaced 
remains fundamentally the same object, albeit in our case, in some sense $S(t_{old}) \subseteq S(t_{new})$,
due to filtration nature of learning.

\end{section}

\begin{section}{Practical Models Of The Learner}\label{models}
We have established that there exists a system depicted in Theorem \ref{model-learning}
which is capable of learning. 
The systems which are capable of learning 
subsequently would be called \emph{learners}. 

It is easy to notice that by the Theorem \ref{model-learning}, the way a learner
really learns is adding a new rule table to its repository of existing rule tables.
So, at every steps of learning, formally one new rule table gets added, and the learner
conveniently must switch from one table to another to accept a string. \emph{Note that 
no internal shuffling of the rules are allowed, the rule tables are atomic building blocks
by axiom \eqref{n-l}}.
With this idea in mind we can now formally define a physically possible learner as follows:-

\begin{definition}\label{learner}
\textbf{A Learning System : Learner.}

A learner is a system $S$ comprise of a set of rule tables $R=\{r_k\}$,
which can be simulated by Universal Turing Machine(s).
By simulating a rule table $r_k$ it can accept strings $w_k \in C_k \subset C(r_k)$.
It is capable of adding new rule table to $R$ 
via a process called learning. 
To learn the system can add a new rule-book $r_n$ to $R$:-
$$
R(t_2) = R(t_1) \cup \{ r_n \} 
$$
The language class accepted by a learner is bound by the relation :-
$$
C(S) \subseteq \bigcup_{k} C(r_k)
$$ 
\end{definition}

\begin{definition}\label{complete-system}
\textbf{Complete Learner.}

Let's assume a learner $S$ has a set of rule tables as $R= \{ r_k \}$.
Let the individual language class for each $r_k$ be $C(r_k)$.
Let the language class accepted by the learner be $C_L$.
The learner $S$ is said to be complete iff:-
$$
C_L = \bigcup_{k} C(r_k)
$$
\end{definition}

As for mechanism of implementation of a learner, 
it can belong to two general classes \emph{tandem} or \emph{sequential}
class, and \emph{parallel} class.

\begin{definition}\label{tandem}
\textbf{ Sequential Learner. }

To accept a string a sequential learner sequentially picks one rule table $r_k$ from 
$R$ and simulates it using the Universal Turing Machine,
until either no unused rule table exists, or there is an accept. 
\end{definition}

Which rule table to be used after which rule table becomes of importance now.
There is no obvious answer to that. In fact this question, 
the inherent sequential nature limits the language class the sequential system can accept.
This is the subject matter of the next theorem.

\begin{theorem}\label{non-union}
\textbf{Language Class Accepted by the Sequential Learner.}

Sequential learner accepts a language class $C_S$ which is generally 
a strict subset of the union of the language class of all the rule tables $r_k$s.
$$
C_S \subset \bigcup_{k} C(r_k)
$$
\end{theorem}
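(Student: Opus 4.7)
The plan is to exhibit a configuration of rule tables for which the sequential learner strictly loses acceptance power, and then argue that this situation is generic. The mechanism I would exploit is the familiar non-halting behavior of Turing machines: a rule table $r_k$ simulated on an input $w \notin C(r_k)$ is in general not guaranteed to reject in finite time, it may simply fail to halt. Since the sequential learner by Definition \ref{tandem} only moves on to the next rule table after the current simulation terminates, a single non-halting simulation is fatal to the whole acceptance attempt.

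First I would fix an enumeration of the rule tables $r_1, r_2, \ldots$ in the order the sequential learner will try them. I would then choose (or construct) two rule tables $r_1$ and $r_2$ together with a string $w \in \Sigma^*$ such that $w \in C(r_2)$ and the Universal Turing Machine simulating $r_1$ on input $w$ never outputs $\rho$ (i.e., it diverges, or equivalently outputs $\omega$ forever). Such a pair is easy to produce: take $r_2$ to be any rule table accepting $w$, and take $r_1$ to be any rule table whose underlying machine recognizes a language $C(r_1) \not\ni w$ but loops on $w$ (for instance, any recursively enumerable, non-recursive language and a standard acceptor for it). For this choice the sequential learner fed $w$ will, by construction, remain trapped simulating $r_1$ and never initiate simulation of $r_2$, so $w \notin C_S$. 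On the other hand $w \in C(r_2) \subseteq \bigcup_k C(r_k)$. Combining this with the trivial inclusion $C_S \subseteq \bigcup_k C(r_k)$ provided by Definition \ref{learner} yields the strict inclusion asserted.

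To back up the qualifier ``generally'' I would observe that the obstruction just identified is not a peculiarity of my chosen witness: it is forced by undecidability. If a sequential learner could always know when to abandon a looping simulation of $r_k$, it would decide halting for the machines implementing the rule tables, which by axiom \ref{n-l} it is not allowed to do (no ``jumping out'' of the rule table is permitted, and the rule table is atomic). Hence whenever the rule-book set $R$ contains even one table whose underlying machine fails to halt on some string accepted by a later table in the ordering, strict inclusion follows. Equality $C_S = \bigcup_k C(r_k)$ is therefore confined to the very special case in which every $r_k$ is a total decider on every string accepted by any other $r_j$ — a measure-zero situation among Turing-universal rule sets.

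The main obstacle I anticipate is not the construction itself but pinning down the semantic model of ``simulates it using the Universal Turing Machine, until $\ldots$ there is an accept'' in Definition \ref{tandem}. In particular, I would need to make explicit that the learner cannot impose an arbitrary time bound on each simulation (otherwise, by increasing bounds, one could recover the full union in the limit and only weak inclusion would be provable). Once the semantics are fixed so that abandonment of $r_k$ requires an actual $\rho$ output from the simulation, the rest of the argument is a direct application of the halting phenomenon as sketched above.
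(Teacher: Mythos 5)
Your proposal is correct and rests on exactly the mechanism the paper exploits: a simulation of one rule table that fails to halt on a string belonging to a later table's language traps the sequential learner and blocks acceptance. The route differs enough to be worth noting. The paper argues characterization-first: it introduces a halting predicate $h(s,r_l)$, defines $C_h(k) = \{ s \in C(r_k) : \forall l,\ h(s,r_l) = 1 \}$, asserts $C_S = \bigcup_k C_h(k)$, and reads off strictness whenever some string sends some simulation into an infinite loop. You argue witness-first: fix the ordering, exhibit a concrete pair ($r_1$ diverging on a $w \in C(r_2)$, obtainable from any semi-decider for a non-recursive r.e.\ language), conclude $w \in \bigcup_k C(r_k) \setminus C_S$, and then justify the qualifier ``generally'' by appeal to undecidability. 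Your version is arguably the more careful one for proving strict inclusion: the paper's $C_h(k)$ quantifies the halting requirement over \emph{all} rule tables rather than only those tried before the accepting one and leaves the traversal order implicit, whereas you make the order explicit and also surface the semantic assumption the paper uses tacitly, namely that a simulation can only be abandoned upon an actual reject, not by a time bound. What the paper's approach buys in exchange is an (approximate) closed form for $C_S$ itself, which it leans on informally in Theorem \ref{reducibility} and the hybrid construction; your argument establishes the stated theorem but does not supply that reusable description.
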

\begin{proof}
We construct the exact language class accepted by 
the sequential learner.
We begin by constructing the class of strings which 
would be accepted by the learner. 
Let's define $h(s,r_k)$ as a function telling (Turing's Oracle) if the underlying universal Turing machine,
while simulating rule-book $r_k$ will or will not halt on input string `$s$'.
So, if $h(s,r_k) = 1$, the Turing machine halts, if $h(s,r_k) = 0$, it does not halt.
 
Let's define $C_h(k)$ as :-
$$
C_h(k) = \{ s \in C(r_k) \; : \;  \forall l \in R \; \; h(s,r_l) = 1 \}   
$$

Then, the language class accepted by the sequential learner is precisely :-
$$
C_S = \bigcup_{k} C_h(k)
$$

It is obvious that in general $C_h(k) \subseteq C(r_k)$ , 
and therefore, in general, when there exists at least a string `$s_k$' in rule table $r_k$ which 
makes the underlying Universal Turing machine simulating on rule table $r_j$
get into infinite loop ($h(s_k,r_j) = 0$), we would have:-

$$
C_S \subset \bigcup_{k} C(r_k)
$$
That completes the proof.
\end{proof}

It can be argued however that by \emph{cleverly} ordering the selection 
of the rule tables one can possibly complete (definition \ref{complete-system}) the learner.
But in general that will be impossible. 
Next theorem establishes this fact.

\begin{theorem}\label{reducibility}
\textbf{Sequential learner can not be completed.}

Algorithmic modification of a sequential learner into completeness is impossible.
\end{theorem}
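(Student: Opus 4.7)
The plan is to reduce to the undecidability of the halting problem, amplifying the pattern visible in the proof of Theorem \ref{non-union}. That proof showed that the gap between $C_S$ and $\bigcup_k C(r_k)$ consists precisely of inputs $s$ for which some rule table $r_l$ has $h(s,r_l) = 0$ and is scheduled before an accepting rule table. Any algorithmic procedure $\mathcal{M}$ that claims to turn every sequential learner into a complete one must therefore, on each input $s$, arrange the simulation so that the first rule table actually run to completion is one that halts on $s$ and either accepts or leads to an accepting rule table. To do this uniformly over all possible $R$ is to compute something Turing-equivalent to the predicate $h$.

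The concrete plan has three steps. First, I would assume for contradiction that a total algorithm $\mathcal{M}$ produces a complete sequential learner $\mathcal{M}(R)$ for every finite rule-book set $R$. Second, I would take an arbitrary halting-problem instance $(M,w)$ and encode it into a two-table set $R = \{r_M, r_\star\}$: here $r_M$ simulates $M$ (so that $r_M$ halts and accepts $w$ iff $M$ halts on $w$) while $r_\star$ trivially accepts $w$ in a single step. Since $w \in C(r_\star) \subseteq \bigcup_k C(r_k)$, the complete learner $\mathcal{M}(R)$ must accept $w$. Third, I would inspect which rule table $\mathcal{M}(R)$ schedules first on input $w$: if it picks $r_M$, then $M$ must halt on $w$, because otherwise the learner would loop forever in violation of completeness, a conclusion we can read off $\mathcal{M}$ by a finite simulation. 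Pairing this with a symmetric construction in which the roles of $r_M$ and $r_\star$ are swapped, or more cleanly invoking Kleene's recursive-inseparability theorem by letting the two rule tables simulate Turing machines whose halting sets $K_1, K_2$ satisfy that $K_1 \setminus K_2$ and $K_2 \setminus K_1$ are recursively inseparable, upgrades this into a genuine decider for halting and yields the contradiction.

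The main obstacle is closing the one-way gap in the naive reduction: the schedule ``always pick $r_\star$ first'' is perfectly complete for the simple two-table construction and reveals nothing about whether $M$ halts on $w$. I would close this gap by the recursive-inseparability upgrade just sketched, which forces any correct scheduling function to split two recursively inseparable r.e.\ sets rather than merely semi-decide one of them, so no computable scheduler can possibly succeed. A secondary point to verify is that Definition \ref{tandem} really forbids dovetailed simulation, because a dovetailing learner would be trivially completable on any r.e.\ union; the phrase ``simulates it using the Universal Turing Machine, until either no unused rule table exists, or there is an accept'' restricts $\mathcal{M}$ to input-dependent scheduling alone, and Axiom \ref{n-l} further forbids internally rewriting any rule table. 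Together these restrictions leave $\mathcal{M}$ no freedom beyond the reduction above, and the reduction shows that no such $\mathcal{M}$ can exist.
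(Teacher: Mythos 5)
Your reduction strategy is genuinely different from the paper's argument (the paper reasons, informally, that completion would require pre-computing the full halting table $h(s,r_k)$, which is uncomputable), and your diagnosis of why the naive two-table reduction fails is exactly right. But the recursive-inseparability upgrade does not close the gap. The scheduling function $f(s)$, meaning the index of the first rule table the modified learner commits to on input $s$, is only required to converge on $\bigcup_k C(r_k)$: on inputs outside the union the learner may diverge without violating completeness, so $f$ is merely partial computable. Recursive inseparability of $K_1 \setminus K_2$ and $K_2 \setminus K_1$ forbids a \emph{total} recursive separator; it does not forbid a partial computable $f$ defined on the union of two disjoint r.e.\ sets that outputs $1$ on the first and $2$ on the second, and such an $f$ always exists: dovetail the two enumerations on input $s$ and answer according to whichever enumerates $s$ first. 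So the contradiction you are aiming for never materializes.

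Worse, the same observation yields an apparent counterexample to the theorem under your own reading of the model: the modification ``on input $s$, dovetail all simulations $r_k(s)$ in a scratch workspace until some $r_k$ accepts, then officially schedule that $r_k$ first'' is an algorithmic, uniform-in-$R$ procedure producing a complete learner for \emph{every} rule set. You flag dovetailing, but you only argue that Definition \ref{tandem} forbids interleaving during the \emph{execution} phase; nothing in Definition \ref{tandem} or Axiom \ref{n-l} forbids the \emph{scheduler} from simulating rule tables before committing to an order, and ``input-dependent scheduling alone'' explicitly permits arbitrary computable functions of the input. To make your argument (or the theorem) go through you must add and justify a restriction that the ordering is either input-independent --- a fixed permutation, against which your inseparable pair does work, since each of the two orders hangs on one of the two sets --- or computed without simulating the rule tables. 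The paper's proof silently relies on the same restriction (it assumes completion can only proceed via an advance table of the values $h(s,r_k)$); your more precise proposal has the virtue of exposing that dependence, but as written it does not establish the statement.
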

\begin{proof}
Let $r_k,r_j,r_h \in R$ are rule tables.

We demonstrate using the worst case scenario, which is
$\forall r_k \exists w_k$ such that simulating $r_j$ with input $w_k$
gets the universal turing machine into infinite loop.

Obviously,  it is impossible to reduce such a system into a complete system.
Now, assume that there is only one $w_k \in r_k$ such that 
simulating $r_h \in R$ with input $w_k$ 
gets the universal turing machine into infinite loop.

But that is unknown unless we are looking at the system 
from outside of the system. We must then already have a table which shows
what strings from which rule table class produce a infinite loop in which 
rule table.

The question is can we create such a table algorithmically?
The answer is no, because we would never know which string 
would gets the simulation into infinite loop.
The function $h(s,r_k)$ is not computable, by Turing machine halting theorem \cite{at1}.
Therefore, automatic creation of such a table is not possible.

Hence, automatic completion of the sequential system, is 
not possible either, as was stated.
\end{proof}

Therefore, it is now established by the Theorem \ref{non-union} that a sequential learner
does not have have closure of the language class for which it poses
the rule tables never the less.
Also, it is not possible to automatically complete it, as stated by Theorem \ref{reducibility}.  

Although the rules in the rule table lets an universal Turing machine precisely 
accept the language class $C_x$ , but the learner might still not accept all the strings 
from the class. This is obviously not efficient.

The next learner, does not have that inefficiency, but it achieves that
goal with extra processing units, and space.
The next learner, of course is the parallel learner.
It utilizes the concept of parallel running Turing Machines \cite{jw}
having dedicated tapes each, but communicate with a monitoring Turing Machine
using a shared tape.

\begin{definition}\label{parallel}
\textbf{ Parallel Learner. }

Let  $n = |R|$.
Then the parallel learner has $n+1$ Universal Turing Machines.
$n$ of them are having single dedicated tape, and a shared tape 
which everyone shares.

To accept an input string $w \in \Sigma^+ $ 
$w$ is  to be supplied to the shared tape, 
from where all the Turing Machines copy
the string into their own dedicated tape, 
and start running the simulation. 
If one of them $\mathcal{TM}_a$ could accept the string,
$\mathcal{TM}_a$ writes back a symbol $\mathcal{V} \not \in \Sigma $ 
to the leftmost cell of the shared tape.

The last Turing Machine only reads the left most cell of the shared tape for 
the symbol $\mathcal{V}$ in a loop. If it has found the symbol,
it accepts the string, as the system has accepted it and halts.

\end{definition}

\begin{theorem}\label{union}
\textbf{Language Class Accepted by the Parallel Learner.}

Parallel learner accepts a language class $C_P$ which is strictly 
the union of the language class of the all the rule tables $r_k$s.
$$
C_P = \bigcup_{k} C(r_k)
$$
\end{theorem}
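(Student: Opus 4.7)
The plan is to prove set equality by establishing both inclusions, and the real content of the argument is to explain why the parallel architecture sidesteps the halting obstruction that crippled the sequential learner in Theorem \ref{non-union}.

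First I would show the easy direction $C_P \subseteq \bigcup_{k} C(r_k)$. Suppose the parallel learner accepts $w$. Per Definition \ref{parallel}, acceptance only occurs when the monitor machine reads the symbol $\mathcal{V}$ from the leftmost cell of the shared tape. But $\mathcal{V} \not\in \Sigma$, so it cannot have come from the input; the only other writer in the system is some $\mathcal{TM}_a$, which writes $\mathcal{V}$ precisely when its simulation of $r_a$ has accepted $w$. Hence $w \in C(r_a) \subseteq \bigcup_{k} C(r_k)$.

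Next I would establish the reverse inclusion $\bigcup_{k} C(r_k) \subseteq C_P$, which is the substantive step. Fix any $w \in \bigcup_{k} C(r_k)$ and pick an index $k$ with $w \in C(r_k)$. By definition of $C(r_k)$, the universal Turing machine simulating $r_k$ on input $w$ halts and accepts in finitely many steps. Because $\mathcal{TM}_k$ runs on its own dedicated tape independently of the other $n-1$ worker machines, its computation is in no way affected by the fact that some of the other $\mathcal{TM}_j$ may fail to halt (i.e.\ have $h(w, r_j) = 0$ in the notation of Theorem \ref{non-union}). After finitely many steps $\mathcal{TM}_k$ therefore writes $\mathcal{V}$ into the leftmost cell of the shared tape. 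The monitor machine is in a polling loop reading that single cell, so after a bounded number of its own steps following the write it will observe $\mathcal{V}$ and halt in the accepting state. Hence $w \in C_P$.

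Combining the two inclusions gives $C_P = \bigcup_{k} C(r_k)$ exactly as claimed. The main obstacle to phrase carefully, but not really a technical difficulty, is the contrast with Theorem \ref{non-union}: the sequential learner could not compute around the non-computability of $h(s, r_k)$ because a single looping rule-book pre-empted the entire schedule, whereas in the parallel design each $\mathcal{TM}_j$ is isolated on its own tape and looping machines are simply irrelevant, so only the existence of \emph{some} accepting $\mathcal{TM}_k$ matters for the monitor to eventually see $\mathcal{V}$. This is the single conceptual point on which the theorem rests, and once it is stated clearly the remaining bookkeeping is immediate from Definition \ref{parallel}.
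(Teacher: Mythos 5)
Your proof is correct and rests on exactly the same idea as the paper's: the dedicated tapes isolate each simulation, so a non-halting $\mathcal{TM}_j$ cannot block an accepting $\mathcal{TM}_k$, and the monitor's polling of the shared cell converts any single acceptance into acceptance by the learner. The paper states this in one informal paragraph; you merely make it rigorous by splitting it into the two inclusions, which is a welcome but not substantively different elaboration.
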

\begin{proof}
This is elementary.
The problem of a simulated system getting into an infinite loop
is solved by the $n$ Turing Machine running parallel without affecting
each others progress. Also the last Turing machine monitoring 
any of the simulation output ensures if any Turing machine halts with accept,
the learner would halt.   
Therefore, the parallel system is strictly complete over the rules.
\end{proof}
\end{section}

\begin{section}{Comparisons of Learning Strategies}\label{comparison}
In the previous section, we have established two
learning strategies, one in tandem, or the sequential strategy (definition \ref{tandem}),
another - the parallel strategy (definition \ref{parallel}).
In the current section we establish the pros and cons of the different strategies.

Note that we are discussing a \emph{blind} design, that is no conscious optimization, 
ever. None is looking at the system from the outside, and improving the design 
or the wiring of the system.

\begin{theorem}\label{completeness}
\textbf{Completeness of the Learning Models.}

Parallel Learners are complete (definition \ref{complete-system}) while
Sequential Learners are not.
\end{theorem}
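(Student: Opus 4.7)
The plan is to assemble this result directly from the two characterisation theorems already proved in Section \ref{models}, so the work is mostly bookkeeping against Definition \ref{complete-system}. First I would dispose of the parallel half: Theorem \ref{union} establishes $C_P = \bigcup_k C(r_k)$ as a set equality, and Definition \ref{complete-system} declares a learner complete precisely when its accepted class equals $\bigcup_k C(r_k)$. So the parallel direction is a one-line identification of two expressions.

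For the sequential half I would combine Theorem \ref{non-union} and Theorem \ref{reducibility}. Theorem \ref{non-union} already exhibits, under the generic hypothesis that some string $s_k \in C(r_k)$ drives the simulation of some other rule-book $r_j$ into a non-halting computation, a witness $s_k \in \bigcup_k C(r_k)$ that lies outside $C_S$; this is exactly a failure of the equality in Definition \ref{complete-system}. I would then pre-empt the obvious objection: could one not just reshuffle or trim the rule-book selection order so that the offending looping simulations are never reached before the accepting one? Theorem \ref{reducibility} is precisely the result that blocks this escape, since such a reshuffling would algorithmically decide $h(s,r_k)$, which is the halting predicate.

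The main thing to be careful about is a quantifier issue: the theorem as stated reads like a blanket \textbf{never-complete} claim for the sequential model, but Theorems \ref{non-union} and \ref{reducibility} only say that completeness fails \emph{generically} and cannot be \emph{algorithmically restored}. In the proof I would therefore phrase the sequential half as: there exist rule-sets $R$ for which no sequential scheduling realises $\bigcup_k C(r_k)$, and moreover there is no general algorithm that, given $R$, reshuffles it to closure; if the author intends a stronger reading, I would flag that the stronger reading follows only under the worst-case hypothesis of Theorem \ref{reducibility}. I expect no real obstacle here beyond making this quantifier scope explicit; the theorem is essentially a corollary packaging of the two earlier results together with Definition \ref{complete-system}.
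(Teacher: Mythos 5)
Your proposal matches the paper's own proof, which simply cites Theorem \ref{union} for the parallel half and Theorem \ref{non-union} for the sequential half as a direct corollary. Your additional care about the quantifier scope (that sequential incompleteness holds only under the generic non-halting hypothesis, with Theorem \ref{reducibility} blocking algorithmic repair) is a refinement the paper leaves implicit, but the route is essentially the same.
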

\begin{proof}
The proof is a direct consequences of theorem \ref{union} 
and theorem \ref{non-union}. 
\end{proof}

By this theorem, one clear thing we have established is that the  parallel system 
is more \emph{optimal} than the sequential system, in general.
That is, a parallel system can use all it's resources to gain maximum coverage
on the strings those are to be accepted, while a sequential learner can not.
Also by Theorem \ref{reducibility} we have established that there is no autonomous
way to complete the sequential system.
So, parallel systems has inherent evolutionary advantage.

But, this is not the only metric in which the learners to be measured
for optimality. In computer science, the time and space complexity are of utmost importance.

Formally the time complexity question is \emph{how much time it takes to accept a string?}

It is not too hard to show that the parallel strategy wins here.
We show it in the next theorem:-
\begin{theorem}\label{t-complexity}
\textbf{Time Complexity Comparison of the Learners.}

If the time taken to accept a string $w$ in sequential learner is $t_s(w)$
and in parallel learner is $t_p(w)$, then 
$$
t_p(w) \le t_s(w) \; \forall w 
$$
given both using same rule tables, and same Universal Turing Machines.
\end{theorem}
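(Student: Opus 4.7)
The plan is to compare, for every fixed input string $w$, the cumulative simulation time each model must expend before reaching acceptance (or before both diverge). Since the hypothesis fixes the same rule set $R$ and the same underlying universal Turing machine, the raw simulation time $\tau(r_k,w)$ of rule table $r_k$ on input $w$ is a quantity intrinsic to $(r_k,w)$ and agrees across the two models. My proof will keep this quantity as the common currency and express both $t_s(w)$ and $t_p(w)$ in terms of it.

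First I would split on three exhaustive cases. Case one: $w \in C_S$. By definition \ref{tandem} the sequential learner traverses some prefix $r_{i_1},\dots,r_{i_{m-1}}$ of $R$ on which the simulation halts with rejection, and then reaches a first accepting $r_a$, so $t_s(w) = \sum_{j<m}\tau(r_{i_j},w) + \tau(r_a,w)$. By definition \ref{parallel} the parallel learner simultaneously runs every rule table, and the monitor detects acceptance as soon as the earliest accepting simulation writes $\mathcal{V}$, giving $t_p(w) = \min\{\tau(r_k,w) : r_k \text{ accepts } w\} + O(1)$, where the $O(1)$ accounts for the monitor reading the shared cell. Since the minimum is bounded above by $\tau(r_a,w)$, and $\tau(r_a,w) \le t_s(w)$ trivially, we get $t_p(w)\le t_s(w)$. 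Case two: $w \notin C_S$ but $w \in C_P$; then $t_s(w)=\infty$ while $t_p(w)$ is finite by theorem \ref{union}, and the inequality holds (strictly). Case three: $w \notin C_P$; then neither machine ever halts with accept, so $t_p(w)=t_s(w)=\infty$ and the inequality holds vacuously.

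The only delicate point I expect is the monitor overhead in the parallel model. Strictly read, the monitoring universal Turing machine polls the leftmost cell of the shared tape in a loop, so there is a small additive cost depending on how one clocks the parallel step. I would handle this by declaring, up front, that time is measured as the number of parallel steps until the monitor halts, under the standard convention that a constant-bounded poll contributes a multiplicative constant which is absorbed into the same time scale in which $t_s(w)$ is measured; this is consistent with the parallel Turing machine time model in \cite{jw}. With this convention the $O(1)$ correction disappears and the clean inequality $t_p(w)\le t_s(w)$ remains.

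Finally I would remark that the inequality is essentially tight from below only when $R$ contains a single rule table or when the accepting rule is placed first in the sequential order; in every other generic situation the parallel model strictly outperforms, because the sequential model pays the full cost $\sum_{j<m}\tau(r_{i_j},w)$ of every prior rejecting simulation. Combined with theorem \ref{non-union} and theorem \ref{completeness}, this establishes that the parallel learner dominates the sequential one both in coverage and in time, which is exactly the conclusion needed as input to the later evolutionary-stability argument.
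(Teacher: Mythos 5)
Your proposal is correct, and its core identity is the same as the paper's: the sequential learner pays $\sum_{j<k} t_r(w,j)$ for the rejecting simulations before the accepting one, while the parallel learner pays only the accepting simulation's cost, whence $t_s(w) = \text{(nonnegative sum)} + t_p(w)$. Where you genuinely depart from the paper is in the case analysis and in two refinements the paper silently skips. First, the paper's proof tacitly assumes the sequential learner actually reaches an accepting rule table; but by Theorem \ref{non-union} an earlier rule table may loop forever on $w$, in which case $t_s(w)$ is undefined or infinite --- your Case two ($w \in C_P \setminus C_S$) and Case three ($w \notin C_P$) cover exactly the situations the paper's algebraic identity does not. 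Second, the paper writes $t_p(w) = t_a(w,k)$ for the \emph{same} index $k$ at which the sequential learner accepts, whereas the parallel learner in fact halts at $\min\{\tau(r_j,w) : r_j \text{ accepts } w\}$, which may be strictly smaller; your use of the minimum is the correct statement and still yields the inequality since the minimum is bounded by $\tau(r_a,w) \le t_s(w)$. Your explicit treatment of the monitor's polling overhead is likewise absent from the paper and is needed if one reads Definition \ref{parallel} literally. In short, both arguments buy the same inequality, but yours is the one that actually holds for all $w$ as the theorem statement demands, rather than only for $w \in C_S$.
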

\begin{proof}
This is pretty elementary.
For sequential learner acts in tandem, the time taken to accept 
a string is precisely the time taken to reject the string 
by all the other simulation previously plus the time taken to accept it.
That gives:-
$$
t_s(w) = \sum_{j=1}^{k-1} t_r(w,j) + t_a(w,k)
$$
where the string gets accepted using the $k$'th rule table,
and $t_r(w,j)$ are time taken to reject the string by $j$th rule table,
with $t_a(w,k)$ is time taken to accept the string by simulating $k$th rule table.

However, for the parallel case :-
$$
t_p(w) = t_a(w,k)
$$
Therefore, 
$$
t_s(w) = \sum_{j=1}^{k-1} t_r(w,j) + t_p(w) 
$$
which clearly establishes the theorem.
\end{proof}

We should actually ask the storage complexity of the simulations.
That is, to set up the sequential learner, and the parallel learner,
how much storage space is needed?

We note that the storage space $\sigma$ in either case is a function of the number
of rule tables $n$, because one needs to incorporate the newly added rule table.  
Given the storage required to store the rule table $R_k$ be $\sigma_r(k)$,
and an Universal Turing Machine $\sigma_t$ we have the precise relation
as the next theorem:-
  
\begin{theorem}\label{st-space}
\textbf{Storage Space Comparison of the Learners.}

If storage space of the sequential learner is $\sigma_s$
and in parallel learner is $\sigma_p$, then 
$$
\sigma_s = \sigma_p 
$$
given both using same rule tables, and same Universal Turing Machines.
\end{theorem}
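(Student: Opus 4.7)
My plan is to decompose the total storage additively into two pieces, rule-table storage and Universal Turing Machine storage, and argue that each piece matches across the two architectures.

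First I would handle the rule-table contribution. By Axiom \ref{n-l}, the only permissible transformation of $R$ is the operation $\mathcal{A}$, which strictly adjoins new rule-books without deleting or modifying existing ones; consequently, at any time $t_n$ the entire accumulated set $R(t_n) = \{r_1,\dots,r_n\}$ must remain resident so that the learner can still simulate any $r_k$ on demand. This holds whether the learner is sequential (Definition \ref{tandem}) or parallel (Definition \ref{parallel}), so both pay the identical rule-table cost
\[
\sum_{k=1}^{n} \sigma_r(k).
\]

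Next I would handle the UTM contribution. The Universal Turing Machine is a single fixed specification of constant size $\sigma_t$, and the hypothesis of the theorem explicitly demands that both learners are built from the \emph{same} Universal Turing Machine(s). Under this accounting, the UTM description is stored once as a shared blueprint; the fact that the parallel learner instantiates it across $n+1$ tapes reflects how the blueprint is executed, not how many distinct programs must be stored. Combining the two pieces yields $\sigma_s = \sigma_t + \sum_{k} \sigma_r(k) = \sigma_p$, which is the desired equality.

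The main obstacle is conceptual rather than calculational: a strict hardware-oriented reading would count the $n+1$ simultaneously instantiated UTMs of the parallel learner as $(n+1)\sigma_t$, giving $\sigma_p > \sigma_s$ and contradicting the theorem. Defending the claim therefore requires making explicit the convention that the UTM is a fixed, shareable specification whose cost does not scale with the number of parallel executions, so that the only storage that grows with learning is the rule-book repository $\sum_k \sigma_r(k)$. Once this convention is fixed, the two bookkeeping expressions coincide term-by-term and the theorem follows immediately.
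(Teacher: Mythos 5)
Your proposal takes essentially the same route as the paper's own proof: both decompose storage as \(\sigma_t + \sum_{k=1}^{n}\sigma_r(k)\) for each learner and justify counting \(\sigma_t\) only once in the parallel case on the grounds that the Universal Turing Machine's own (never-changing) description need be stored as a single shared copy. If anything, you are more candid than the paper in flagging that a hardware-level count of \(n+1\) instantiated machines would give \((n+1)\sigma_t\) and break the equality, a convention the paper adopts implicitly without defending it.
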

\begin{proof}
We note that:-
$$
\sigma_s = \sigma_t + \sum_{k=1}^n \sigma_r(k) 
$$ 
The parallel machine needs to have $n+1$ Turing machines, 
but the rule for every universal turing machine is the same.
And that is never going to get changed. Then only copy 
of the rule table for the Universal Turing machine itself suffices.
Then
$$
s_p = \sigma_t + \sum_{k=1}^n \sigma_r(k)  
$$ 
which would imply that $\sigma_s = \sigma_p$. 
\end{proof}

Now we ask the space complexity of the simulations done
both by the sequential and the parallel learner.

\begin{theorem}\label{s-complexity}
\textbf{Space Complexity Comparison of the Learners.}

If the space used to accept a string $w$ in sequential learner is $s_s(w)$
and in parallel learner is $s_p(w)$, then 
$$
s_s(w) \le s_p(w) \; \forall w 
$$
given both using same rule tables, and same Universal Turing Machines.

\end{theorem}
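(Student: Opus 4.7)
The plan is to exploit the basic trade-off between sequential and parallel execution: the sequential learner can reuse the work tape across successive rule-table simulations, whereas the parallel learner dedicates a separate tape to each simulation running concurrently. So the statement should follow from the elementary fact that a maximum is bounded by a sum of non-negative quantities.

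First I would set up notation mirroring the proof of Theorem \ref{t-complexity}. Suppose the string $w$ gets accepted by simulating the $k$-th rule table. Let $s_a(w,k)$ be the space used by the Universal Turing Machine while simulating $r_k$ on input $w$ until acceptance, and $s_r(w,j)$ the space used while simulating $r_j$ on input $w$ until rejection, for $j < k$. Note that for $w$ to be accepted by the sequential learner at step $k$ in the sense of Definition \ref{tandem}, all prior simulations must halt (with rejection), so each $s_r(w,j)$ is finite.

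Next I would write the two complexities. For the sequential learner, the simulations are performed one after another on the same work tape, so each new simulation can overwrite the tape cells used by the previous one. Hence
$$
s_s(w) = \max\Bigl( \max_{1 \le j \le k-1} s_r(w,j),\; s_a(w,k) \Bigr).
$$
For the parallel learner, each of the $n$ Universal Turing Machines holds its own dedicated tape, and there is additionally the shared tape holding $w$ and the acceptance symbol $\mathcal{V}$. Since all simulations are live simultaneously until one accepts, the total space used is at least
$$
s_p(w) \;\ge\; s_a(w,k) + \sum_{j=1}^{k-1} s_r(w,j),
$$
plus the contribution of the shared tape.

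Finally I would conclude by observing that a maximum of finitely many non-negative quantities is always bounded above by their sum, so
$$
s_s(w) \;=\; \max\bigl(s_a(w,k),\, \max_{j<k} s_r(w,j)\bigr) \;\le\; s_a(w,k) + \sum_{j=1}^{k-1} s_r(w,j) \;\le\; s_p(w),
$$
which establishes the inequality for every accepted $w$. The only subtle point, and the one I would be most careful about, is justifying that the sequential learner really can reclaim tape cells between simulations — this follows because the learner switches to a fresh simulation of $r_{j+1}$ only after $r_j$ has halted, at which point the tape contents of the previous simulation are no longer needed and the cells may be rewritten, so the accounting above is legitimate.
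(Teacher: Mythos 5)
Your proposal follows essentially the same route as the paper: the sequential space is the supremum of the per-simulation spaces (tape reuse), the parallel space is bounded below by their sum over the dedicated tapes, and the inequality follows from $\max \le \sum$ for non-negative quantities. The only cosmetic differences are that you lower-bound $s_p(w)$ by the sum over $j < k$ while the paper writes it as the sum over all $j \ne k$, and that you explicitly justify the tape-reclamation step that the paper leaves implicit.
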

\begin{proof}
The proof is again elementary,
we establish the precise relation between them.
Assume that the space required to reject the string by $j$th simulation 
be $s_r(w,j)$ and to accept by $k$th simulation is $s_a(w,k)$.
Then,
$$
s_s(w) = \text{sup} \{ s_r(w,1),... ,s_r(w,k-1) , s_a(w,k) \} 
$$
But, 
$$
s_p(w) =  s_a(w,k) + \sum_{j=1 \; \; j\ne k }^n s_r(w,j)   
$$
which immediately establishes the theorem.
\end{proof}

These theorems clearly establishes that when space is not premium,
then, parallel learner would be a more optimal strategy for learning.
In specificity, if accuracy and completeness is needed, then parallel learners
are better then the sequential ones.

But that is not all. The crux of this lies in the fact that 
autonomous learning in nature has to be inherently blind,
triggered by chance mutations. Given that, there would be no 
way to ensure an \emph{out of the system decision making} to further
optimize the design of the resulting system. 

Due to the nature of the blind learning, even if 
optimization is possible, there has to be parallelism
to complete the learner.
This \emph{intelligent} ordering can evolve in nature,
and is discussed in the next theorem.

\begin{theorem}\label{hybrid-model}
\textbf{Existence of a Hybrid Learning System}

There exists a hybrid model of learning which uses parallelism
and serial model, and is complete.
\end{theorem}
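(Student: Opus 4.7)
The plan is to exhibit a concrete witness: a learner that uses $m$ worker Universal Turing Machines with $1 < m < n$, plus a monitoring machine as in Definition \ref{parallel}, so that it is strictly neither the sequential learner of Definition \ref{tandem} nor the fully parallel learner of Definition \ref{parallel}, yet still accepts the full union $\bigcup_{k} C(r_k)$.

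First, I would fix any partition of $R = \{r_1,\dots,r_n\}$ into $m$ non-empty disjoint groups $G_1,\dots,G_m$. Assign one worker Turing machine $\mathcal{TM}_i$ to each $G_i$ with its own dedicated tape, and add an $(m+1)$-th machine that reads the leftmost cell of a shared tape for the acceptance symbol $\mathcal{V}$, exactly as in Definition \ref{parallel}. The hybrid character comes from the fact that each worker must handle several rule tables serially, while the $m$ workers proceed in parallel with one another.

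Second, to prevent the failure mode identified in Theorems \ref{non-union} and \ref{reducibility}, the worker $\mathcal{TM}_i$ should not commit to each rule table one after the other, but instead \emph{dove-tail}: at stage $s = 1,2,\dots$ it simulates every $r_k \in G_i$ on the input $w$ for at most $s$ additional steps, then moves to the next stage. As soon as any simulation halts with accept, $\mathcal{TM}_i$ writes $\mathcal{V}$ to the shared tape and the monitor halts the whole learner with accept. Dove-tailing is an external scheduling of the simulations, not an edit of any $r_k$, so Axiom \ref{n-l} is respected and every rule book remains atomic.

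Finally, completeness follows as claimed. Given $w \in \bigcup_{k} C(r_k)$, fix some $r_k$ that accepts $w$ in $T$ steps; this $r_k$ lies in a unique group $G_{i(k)}$, and the dove-tailing schedule of $\mathcal{TM}_{i(k)}$ allots it more than $T$ steps at every stage $s \ge T$, so the accept symbol $\mathcal{V}$ is eventually written and observed by the monitor. Hence the hybrid learner accepts the entire union and is complete in the sense of Definition \ref{complete-system}. The main obstacle I anticipate is conceptual rather than technical: one must argue that the dove-tailing schedule, although it may look like cleverness injected from outside, is in fact a fixed control loop hard-wired into the worker's own transition table, and therefore still counts as a blind \emph{designoid} rather than the act of supervision forbidden by Definition \ref{gen-learning}(3).
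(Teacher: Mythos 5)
Your construction is correct, but it is genuinely different from the one in the paper. The paper's proof partitions the rule tables according to the halting-interference relation itself: it defines a partial order $r_i \le r_j$ (some string accepted by $r_j$ hangs the simulation of $r_i$), collects pairwise \emph{unrelated} rules into classes $\mathcal{U}_k$, runs the rules within each class one after another to completion (safe, because no accepted string of one can hang another in the same class), and runs the classes in parallel. That construction buys a learner in which the serial components really are serial in the sense of Definition \ref{tandem}, but it is explicitly non-algorithmic, since building the partial order requires the non-computable oracle $h(s,r_k)$ --- the paper leans on this to argue such a system can only arise by blind mutation. Your construction instead partitions $R$ arbitrarily and has each worker dove-tail its group, which buys effectiveness: no oracle is needed, any partition works, and completeness follows from the standard staged-simulation argument you give. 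The price is that your workers are not sequential learners in the paper's sense --- Definition \ref{tandem} commits a sequential learner to running one rule table until it halts before picking the next, whereas dove-tailing interleaves partial simulations --- so you should be explicit that the ``serial'' half of your hybrid is serial only in the weaker sense of one processor time-sharing several rule tables. You should also be aware that your argument proves more than the theorem asks: taking $m=1$ yields a single-worker complete learner, which sits uneasily with the paper's surrounding narrative (Theorems \ref{non-union} and \ref{reducibility} and the claim that parallelism is the ``only way out''); your construction evades those theorems precisely because a dove-tailing machine falls outside Definition \ref{tandem}, but the point deserves an explicit remark rather than the brief aside you give at the end.
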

\begin{proof}
We prove it by constructing a hybrid system.
We note that the halting problem establishes a partial order relation
over the rule tables. It can be stated as :-

$r_i \le r_j $ iff $w_j \in C(r_j)$ hangs
the simulation of $r_i$.    

Given this partial order relation we can create 
sets $\mathcal{U}_k$ (for \emph{unrelated} ) so that :-
$$
r_a \in \mathcal{U}_k \text{ iff } \not \exists r_b \in \mathcal{U}_k \text{ s.t. } r_a \le r_b \text{ or } r_b \le r_a  
$$  
Let $ U = \{ \mathcal{U}_k \}$. Now individual rules $r_{ki} \in \mathcal{U}_k$
can be run in sequential. But cross set rules $r_{ki} \in \mathcal{U}_k$ 
and $r_{pi} \in \mathcal{U}_p$ can not be run in sequential.

So, if a system is capable of running the $n = |U|$ parallel execution,
we can make a complete hybrid system that runs \emph{related} rules parallel,
but unrelated rules can be run sequentially.

This completes the proof.  
\end{proof}

We  note that construction of such a system is not algorithmic,
that is not computationally possible.
The step of finding out the relations between language class rules,
(in specificity $h(s,r_k)$) is not computable. 
Hence, only blind mutation can create such a system, that is by chance.
However, up until this section we have established that parallelism, 
in general,  is inherent in nature, even if a Hybrid system 
is evolved, there is parallelism inherent in it.
In the next section we show that
why parallel learning strategy dominates nature. 

\end{section}

\begin{section}{Parallel Strategies and Evolution}\label{evo-l}

In this section we finally ask the following question :

\emph{Given both the sequential and parallel strategies available in Nature,
which strategy would eventually dominate?}

This obviously should depend upon the concept of which strategy
\emph{pays off more} for survival. But payoffs like this 
are in the realm of \emph{Game Theory} \cite{ben}.
In the context of the Game Theory payoffs are represented as numbers which 
represent the motivations of players. Payoffs may represent profit, quantity, that is any ``utility''.

As resources are limited in nature, gaining advantage
or loosing advantage can be modelled by a \emph{fixed sum game}\cite{ben} where
a both the players are competing for a fixed sum in reward.
However, we can generalize any \emph{fixed sum game} into a
\emph{zero sum game} by setting the fixed amount at $0$ \cite{ben}\cite{rd1}\cite{sp}.

Given \emph{accepting all strings having the same utility}, 
we can have the payoffs \emph{proportional to the cardinality of the language class 
accepted by the strategy}.
In general, if learner $p_1$ plays with $S_1$ learning strategy,
and learner $p_2$ plays with $S_2$ learning strategy, then, intuitively, 
the player $p_1$ is in advantage if $|C(S_1)| > |C(S_2)|$,
and for $p_2$ it is vice versa with $|X|$ denoting the cardinality of 
set $X$.
Hence, payoff of $S_1$ against $S_2$ denoted by $E(S_1,S_2)$ can be calculated as:-
$$
E(S_1,S_2) = |C(S_1)| - |C(S_2)|  
$$
When $|C(S)|$ becomes infinite, to define the payoff formally 
we have to resort to the \emph{measure theory} \cite{jd}.

\begin{definition}\label{utility-measure}
\textbf{Utility Measure.}

Let $A,B,X,Y \subset \Sigma^+$ be a set of strings. 
A measure $\mathcal{U} : \Sigma^+ \to \mathbb{R}_+$ 
is a utility measure, iff:-
$$
\mathcal{U}(X) = 0 \text{ iff } X = \varnothing 
$$
implying:-
$$
A \subset B \implies \mathcal{U}(B) > \mathcal{U}(A) 
$$
where $\mathcal{U}(X) > \mathcal{U}(Y) $ 
implies that the set $X$ has more utility than $Y$.
\end{definition}
Note that with this measure in place, we do not need any assumptions about the
utilities of different strings. It also glorifies the age old saying: 
\emph{``no learning is useless''}, only this time, more formally.

\begin{definition}\label{payoff-l}
\textbf{Payoff between Learning Strategies.}

Let $S_1,S_2 \in \mathcal{S}$ are two learning strategies.
Let $C(S_i)$ denotes the language class accepted 
by the strategy $S_i$. Let $\bigcup C(S_k) = C(\mathcal{S})$. 
Let $\mathcal{U}$ be a utility measure \eqref{utility-measure}
defined over $C(\mathcal{S})$.
Then, the payoff of strategy $S_1$ against  $S_2$ is given by:-
$$
E(S_1,S_2) = \mathcal{U}( C(S_1) ) - \mathcal{U}( C(S_2) )  
$$
In particular, if $S_1 = S_2 = S$ then,
$$
E(S,S) = 0 
$$ 
\end{definition}

Now we make the bold claim that \emph{parallel strategy is evolutionarily stable}.
To do so we need to state the definition of 
an \emph{evolutionarily stable strategy} \cite{rd2}, \cite{rd1},\cite{ben} \cite{sp}.

\begin{definition}\label{ess}
\textbf{Evolutionarily stable strategy(ESS).}
Let $\mathcal{S}$ is a set of strategies.
Let $E(S,T)$ represent the payoff for playing strategy $S$ against strategy $T$.
The strategy $S$ is ESS iff \textbf{one of the following conditions 
holds} :- \\ $\forall T \ne S$  with $S,T \in \mathcal{S}$ 
\begin{enumerate}
\item{ Strict Nash Equilibrium : 
$$
\forall T \in \mathcal{S} \; ; \; E(S,S) > E(T,S)
$$
}
\item{ Maynard Smith's second Condition: 
$$
\forall T \in \mathcal{S} \; ; \; E(S,S) = E(T,S) \text{ and } E(S,T) > E(T,T) 
$$
}
\end{enumerate}
\end{definition}

Now with this definition in hand, we establish 
the criterion for ESS in the evolution of learning.
\begin{theorem}\label{ess-l}
\textbf{ESS for Learning Strategies.}

Let $\mathcal{S}$ be a set of learning strategies.
Let $S_e \in \mathcal{S}$ be a learning strategy with:-
$$
\forall S_k \in \mathcal{S} \; ; \; C(S_k) \subset C(S_e)
$$ 
where $C(S)$ is language class accepted by strategy $S$. Then, the strategy $S_e$ is ESS.
\end{theorem}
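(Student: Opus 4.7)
The plan is to verify the first (strict Nash equilibrium) branch of Definition \ref{ess} directly, using nothing beyond the payoff formula of Definition \ref{payoff-l} and the strict monotonicity built into the utility measure of Definition \ref{utility-measure}. No sophisticated game-theoretic machinery is needed; the hypothesis that $S_e$ already dominates every other strategy in terms of the language class it accepts essentially trivializes the verification.

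First I would compute the self-payoff. By Definition \ref{payoff-l}, $E(S_e,S_e) = \mathcal{U}(C(S_e)) - \mathcal{U}(C(S_e)) = 0$. Next, for any competing strategy $T \in \mathcal{S}$ with $T \neq S_e$, the theorem's hypothesis gives the strict inclusion $C(T) \subset C(S_e)$. Definition \ref{utility-measure} was precisely designed to ensure that strict set inclusion forces strict inequality of the measure, so $\mathcal{U}(C(T)) < \mathcal{U}(C(S_e))$. Consequently $E(T,S_e) = \mathcal{U}(C(T)) - \mathcal{U}(C(S_e)) < 0 = E(S_e,S_e)$, which is exactly the strict Nash equilibrium condition of Definition \ref{ess}, and therefore $S_e$ is ESS.

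The main obstacle, such as it is, is a matter of bookkeeping rather than real difficulty: one must read the hypothesis $C(S_k) \subset C(S_e)$ as proper subset (consistent with the paper's convention when introducing filtration in Definition \ref{formal-learning}) and must invoke the correct clause of Definition \ref{utility-measure} to upgrade set inclusion to strict numerical inequality. Had the hypothesis only provided $C(S_k) \subseteq C(S_e)$, the above argument would degrade to $E(T,S_e) \le E(S_e,S_e)$ and one would have to fall back on Maynard Smith's second condition, which would in turn require another strict containment argument to obtain $E(S_e,T) > E(T,T)$. Since the theorem as stated gives strict containment, the strict Nash branch suffices, and no such fallback is required.
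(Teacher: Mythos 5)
Your proposal is correct and follows essentially the same route as the paper: both verify the strict Nash equilibrium clause of Definition \ref{ess} by showing $E(T,S_e) < 0 = E(S_e,S_e)$ from the strict containment $C(T) \subset C(S_e)$. The only cosmetic difference is that the paper derives the strict inequality by decomposing $C(S_e)$ as a disjoint union $C(S_k) \cup C_E(k)$ and invoking additivity of the measure on the nonempty remainder $C_E(k)$, whereas you appeal directly to the monotonicity clause of Definition \ref{utility-measure}.
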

\begin{proof}
We know that $\exists C_E(k)$ such that:- 
$$
C(S_e) = C(S_k) \cup C_E(k) \; ; \; C_E(k) \cap C(S_k) = \varnothing
$$
because $C(S_k) \subset C(S_e) $ , a proper subset.
As measure is additive, the above relation implies:-
$$
\mathcal{U}( C(S_e)) = \mathcal{U}( C(S_k)) + \mathcal{U}( C_E(k) ) 
$$

Hence, we note that :- 
$$
E(S_e,S_k) = \mathcal{U}( C(S_e) ) - \mathcal{U}( C(S_k) ) =   \mathcal{U}( C_E(k) )
$$
This implies:-
$$
\forall S_k \in \mathcal{S} \; ; \; E(S_e,S_k) = \mathcal{U}(C_E(k)) \implies E(S_e,S_k) > 0 
$$ 
implying :-
$$
\forall S_k \in \mathcal{S} \; ; \; E(S_k,S_e) = -\mathcal{U}(C_E(k)) \implies  E(S_k,S_e) < 0 
$$ 
By definition we have $E(S_e,S_e) = 0 $
and therefore, 
$$
\forall S_k \in \mathcal{S} \; ; \; E(S_e,S_e) > E(S_k,S_e) 
$$
Comparing from definition (\ref{ess}, rule 1), 
$S_e$ is an ESS. In fact we note that this ESS is a
\emph{strict Nash Equilibrium.}
\end{proof}

Now we establish that the parallel strategies are ESS.
 
\begin{theorem}\label{ess-p}
\textbf{Parallel learning strategies are ESS.}

Let $\mathcal{S}$ be a set of strategy such that $\forall S \in \mathcal{S}$ 
has rule table sets $R(S)$ such that no complete tandem learner is possible.
Let $S_p \in \mathcal{S}$ is a parallel strategy. $S_p$ is an ESS.
\end{theorem}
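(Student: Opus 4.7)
The plan is to deduce the statement directly from the completeness results of Section~\ref{models} together with the meta--theorem on ESS in terms of language--class inclusion, namely Theorem~\ref{ess-l}. The key observation is that, under the hypothesis that no rule set $R(S)$ admits a complete tandem learner, the parallel strategy $S_p$ is the unique way to realize the full union $\bigcup_{r_k} C(r_k)$, so it strictly dominates every alternative with respect to the utility measure $\mathcal{U}$ from Definition~\ref{utility-measure}.

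First I would invoke Theorem~\ref{union} (equivalently Theorem~\ref{completeness}) to obtain
$$
C(S_p) \;=\; \bigcup_{r_k \in R(S_p)} C(r_k),
$$
i.e.\ $S_p$ is complete over its rule set. Next, I would take an arbitrary competitor $S_t \in \mathcal{S}$ with $S_t \neq S_p$. By the hypothesis, no complete tandem learner exists over $R(S_t)$, so Theorem~\ref{non-union} gives
$$
C(S_t) \;\subset\; \bigcup_{r_k \in R(S_t)} C(r_k),
$$
a \emph{strict} subset inclusion, and Theorem~\ref{reducibility} guarantees this gap cannot be closed by any algorithmic reordering. Assuming the strategies in $\mathcal{S}$ are being compared over a common rule inventory (which is the sensible reading of the hypothesis, since otherwise payoffs are incomparable), both right--hand sides coincide, and we conclude $C(S_t) \subset C(S_p)$ strictly for every $S_t \neq S_p$.

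Once the strict inclusion is in place, I would apply Theorem~\ref{ess-l} with $S_e := S_p$: the conclusion there is exactly that $S_p$ satisfies $E(S_p, S_p) > E(S_t, S_p)$ for all $S_t \in \mathcal{S}$ with $S_t \neq S_p$, which is the strict Nash Equilibrium clause of Definition~\ref{ess}. Hence $S_p$ is an ESS, and in fact a strict Nash Equilibrium, exactly as predicted by the analogous bookkeeping in the proof of Theorem~\ref{ess-l}.

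The main obstacle I would anticipate is the status of hybrid strategies from Theorem~\ref{hybrid-model}, which can in principle be complete and would therefore tie with $S_p$ in language--class coverage, breaking strict inclusion. I would handle this in two steps: first, observe that the construction in Theorem~\ref{hybrid-model} is non--algorithmic, because it relies on the uncomputable halting relation $h(s, r_k)$, so a blindly evolved population cannot host such a strategy under the operational setting of Axiom~\ref{n-l}; second, note that any hybrid that does appear must itself incorporate parallel execution of the $\mathcal{U}_k$ blocks, so it belongs to the parallel class for the purposes of the payoff definition and is effectively a representative of $S_p$ rather than a distinct competitor. With hybrids thus absorbed into $S_p$ or excluded from $\mathcal{S}$, the strict inclusion $C(S_t) \subset C(S_p)$ is preserved for all genuine competitors, and the invocation of Theorem~\ref{ess-l} goes through.
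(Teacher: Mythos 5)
Your proposal is correct and follows essentially the same route as the paper's own proof: establish the strict inclusion $C(S_t) \subset C(S_p)$ via Theorems~\ref{non-union}, \ref{union} and \ref{completeness}, then invoke Theorem~\ref{ess-l} to conclude $S_p$ is a strict Nash equilibrium and hence an ESS. Your additional remarks --- making explicit the common-rule-inventory assumption needed for the inclusions to be comparable, and disposing of the hybrid learners of Theorem~\ref{hybrid-model} as either non-constructible or effectively parallel --- address gaps the paper leaves implicit, but they do not change the underlying argument.
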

\begin{proof}
We note that the language class accepted by the incomplete 
tandem learners are a strict subset of the language class
accepted by the parallel learner by the theorems 
(\ref{non-union},\ref{union},\ref{completeness}).
That is, if $C_t$ is language class for the sequential learners,
and $C_p$ is the language class of parallel learners, then 
$$
C_t \subset C_p
$$
Now using the theorem \ref{ess-l} we can immediately 
deduce that parallel strategies are ESS.  
\end{proof}
\end{section}

\begin{section}{Summary}\label{summary}

In this paper we have demonstrated why in nature parallel strategies
are the optimally suited one. We established this fact using Church Turing Thesis,
and an utility measure  which tallies with common sense.
This demonstrates the power of computer science as a proper science, 
fully capable of describing natural phenomenon, outside the realm of
rather artificial settings of man made computers. 
The bottleneck in nature starts with the impossibility of algorithmically completing a tandem learner. 
Then, if every rule-book $r_j$ has strings
$s_{jk}$ such that it makes simmulation of the $r_k$ into an infinite loop, then ordering 
won't solve the problem of completion. Parallel computation is the only way out then.
This is the reason why parallel strategies once evolved, would dominate nature.

\end{section}

\end{document}